\DeclareMathOperator*{\argmax}{arg\,max}
\DeclareMathOperator*{\argmin}{arg\,min}
\DeclareMathOperator*{\diag}{diag}
\newtheorem{theorem}{Theorem}
\icmltitlerunning{Improving Robustness of Deep-Learning-Based Image Reconstruction}
\begin{document}

\twocolumn[
\icmltitle{Improving Robustness of Deep-Learning-Based Image Reconstruction}




\begin{icmlauthorlist}
\icmlauthor{Ankit Raj}{too}
\icmlauthor{Yoram Bresler}{too}
\icmlauthor{Bo Li}{too}
\end{icmlauthorlist}
\icmlaffiliation{too}{University of Illinois at Urbana-Champaign, USA}
\icmlcorrespondingauthor{Ankit Raj}{ankitr3@illinois.edu}
\icmlcorrespondingauthor{Yoram Bresler}{ybresler@illinois.edu}
\icmlcorrespondingauthor{Bo Li}{lbo@illinois.edu}
\icmlkeywords{Machine Learning, ICML}
\vskip 0.3in
]
\begin{NoHyper}
\printAffiliationsAndNotice{}
\end{NoHyper}

\begin{abstract}
Deep-learning-based methods for different applications have been shown vulnerable to adversarial examples. These examples make deployment of such models in safety-critical tasks questionable. Use of deep neural networks as inverse problem solvers has generated much excitement for medical imaging including CT and MRI, but recently a similar vulnerability has also been demonstrated for these tasks. We show that for such inverse problem solvers, one should analyze and study the effect of adversaries in the measurement-space, instead of the signal-space as in previous work. In this paper, we propose to modify the training strategy of  end-to-end deep-learning-based inverse problem solvers to improve robustness. We introduce an auxiliary network to generate adversarial examples, which is used in a min-max formulation to build robust image reconstruction networks. Theoretically, we show for a linear reconstruction scheme the min-max formulation results in a singular-value(s) filter regularized solution, which suppresses the effect of adversarial examples occurring because of ill-conditioning in the measurement matrix. We find that a linear network using the proposed min-max learning scheme indeed converges to the same solution. In addition, for non-linear Compressed Sensing (CS) reconstruction using deep networks, we show significant improvement in robustness using the proposed approach over other methods. We complement the theory by experiments for CS on two different datasets and evaluate the effect of increasing perturbations on trained networks. We find the behavior for ill-conditioned and well-conditioned measurement matrices to be qualitatively different.
\end{abstract}

\section{Introduction}
\label{intro}
Adversarial examples for  deep learning based methods have been demonstrated for different problems \cite{szegedy2013intriguing, kurakin2016adversarial, cisse2017houdini, eykholt2017robust, xiao2018generating}. It has been shown that with minute perturbations, these networks can be made to produce unexpected results. Unfortunately, these perturbations can be obtained very easily. There has been plethora of work to defend against these attacks as well \cite{madry2017towards, tramer2017ensemble, athalye2018obfuscated, wong2018scaling, Jang_2019_ICCV, jiang2018learning, xu2017feature, schmidt2018adversarially}. Recently, \cite{antun2019instabilities, choi2019evaluating} introduced adversarial attacks on image reconstruction networks. In this work, we propose an adversarial training scheme for image reconstruction deep networks to provide robustness. \\
Image reconstruction involving the recovery of an image from indirect measurements  is used in many applications, including critical applications such as medical imaging, e.g.,  Magnetic Resonance Imaging (MRI), Computerised Tomography (CT) etc.  Such applications demand the reconstruction to be stable and reliable. On the other hand, in order to speed up the acquisition, reduce sensor cost, or reduce radiation dose, it is highly desirable to subsample the measurement data, while still recovering the original image. This is enabled by the compressive sensing (CS) paradigm \cite{candes2006stable, donoho2006compressed}. CS  involves projecting a high dimensional,  signal $x\in\mathbb{R}^n$ to a lower dimensional measurement $y\in\mathbb{R}^m, m\ll n$, using a small set of linear, non-adaptive frames. The noisy measurement model is:
\begin{equation}
    y = Ax+v,A\in\mathbb{R}^{m\times n}, v\sim \mathcal{N}(0,\sigma^2 I)
\end{equation}
where $A$ is the measurement matrix. The goal is to recover the unobserved natural image $x$, from the compressive measurement $y$. Although the problem with $m \ll n$ is severely ill-posed and does not have a unique solution, CS achieves nice, stable solutions for a special class of signals $x$ - those that are sparse or sparsifiable, by using  sparse regularization techniques \cite{candes2006stable, donoho2006compressed, elad2006image, dong2011image, wen2015structured, liu2017image, dabov2009bm3d, yang2010image, elad2010sparse, li2009user, ravishankar2012learning}. \\
Recently, deep learning based methods have also been proposed as an alternative method for performing image reconstruction \cite{zhu2018image, jin2017deep, schlemper2017deep, yang2017dagan, hammernik2018learning}. While these methods have achieved state-of-the-art (SOTA) performance, the networks have been found to be very unstable \cite{antun2019instabilities}, as compared to the traditional methods. Adversarial perturbations have been shown to exist for such networks, which can degrade the quality of image reconstruction significantly. \cite{antun2019instabilities} studies three types of instabilities: \textit{\textbf{(i)}}   Tiny (small norm) perturbations applied to images that are almost invisible in the original images, but cause a significant distortion in the reconstructed images. \textit{\textbf{(ii)}} Small structural changes in the original images, that get removed from the reconstructed images. \textit{\textbf{(iii)}} Stability with increasing the number of measurement samples. We try to address instability (i) above.\\
In this paper, we argue that studying the instability for image reconstruction networks in the $x$-space as addressed by \cite{antun2019instabilities} is sub-optimal and instead, we should consider perturbations in the measurement, $y$-space. To improve robustness, we modify the training strategy:  we introduce an auxiliary network to generate adversarial examples on the fly, which are used in a min-max formulation. This results in an adversarial game between two networks while training, similar to the Generative Adversarial Networks (GANs) \cite{goodfellow2014generative, arjovsky2017wasserstein}. However, since the goal here is to build a robust reconstruction network, we make some changes in the training strategy compared to GANs.\\
Our theoretical analysis for a special case of a linear reconstruction scheme shows that the min-max formulation results in a singular-value filter regularized solution, which suppresses the effect of adversarial examples. 
Our experiment using the min-max formulation with a learned adversarial example generator 
for a linear reconstruction network shows that the network indeed converges to the solution obtained theoretically. For a complex non-linear deep network, our experiments show that training using the proposed formulation results in more robust network, both qualitatively and quantitatively, compared to other methods. Further, we experimented and analyzed the reconstruction for two different measurement matrices, one well-conditioned and another relatively ill-conditioned. We find that the behavior in the two cases is qualitatively different.
\section{Proposed Method}
\subsection{Adversarial Training}
One of the most powerful methods for training an adversarially robust network is adversarial training \cite{madry2017towards, tramer2017ensemble,sinha2017certifying,arnab2018robustness}. It involves training the network using adversarial examples, enhancing the robustness of the network to attacks during inference. This strategy has been quite effective in classification settings, where the goal is to make the network output the correct label corresponding to the adversarial example.\\
Standard adversarial training involves solving the following min-max optimization problem:
\begin{align}
\label{minmax_gen}
    \min_{\theta} \mathbb{E}_{(x, y) \in \mathbb{D}} [\max_{\delta: \|\delta\|_p \leq \epsilon} \mathcal{L}(f(x+\delta; \theta), y)]
\end{align}
where $\mathcal{L}(\cdot)$ represents the applicable loss function, e.g., cross-entropy for classification, and $\delta$ is the perturbation added to each sample, within an $\ell_p$-norm ball of radius $\epsilon$. \\
This min-max formulation encompasses possible variants of adversarial training. It consists of solving two optimization problems: an inner maximization and an outer minimization problem. This corresponds to an adversarial game between the attacker and robust network $f$. The inner problem tries to find the optimal $\delta: \|\delta\|_p \leq \epsilon$ for a given data point $(x,y)$ maximizing the loss, which essentially is the adversarial attack, whereas the outer problem aims to find a $\theta$ minimizing the same loss. For an optimal $\theta^{*}$ solving the 
equation \ref{minmax_gen}, then $f(; \theta^*)$ will be robust (in expected value) to all the $x_{adv}$ lying in the $\epsilon$-radius of $\ell_p$-norm ball around the true $x$.
\subsection{Problem Formulation}
\cite{antun2019instabilities} identify instabilities of a deep learning based image reconstruction network by maximizing the following cost function:
\begin{align} \label{eq:x-space}
    Q_y(r) = \frac{1}{2} \|f(y+Ar) - x\|_2^2 - \frac{\lambda}{2} \|r\|^2
\end{align}
As evident from this framework, the perturbation $r$ is added in the $x$-space for each $y$, resulting in  perturbation $Ar$ in the $y$-space. We argue that this formulation can miss important aspects in image reconstruction, especially in ill-posed problems, for the following three main reasons:
\begin{enumerate}
    \item It may not be able to model all possible perturbations to $y$. The perturbations $A\delta$ to $y$ modeled in this formulation are all constrained to the range-space of $A$. 
    When $A$ does not have full row rank, there exist perturbations to $y$ that cannot be represented as $A\delta$.
    \item 
    It misses instabilities created by the ill-conditioning of the reconstruction problem.
    Consider a simple ill-conditioned reconstruction problem:
    \begin{align}
        A = \begin{bmatrix} 
             1 & 0 \\
             0 & r 
            \end{bmatrix} 
        \text{ and }
            f = \begin{bmatrix} 
             1 & 0 \\
             0 & 1/r 
            \end{bmatrix} 
        \label{ill_cond_2x2}
    \end{align}
    where $A$ and $f$ define the forward and reconstruction operator respectively, and $|r| \ll 1$.  For $\delta=[0, \epsilon]^T$ perturbation in $x$, the  reconstruction is $f(A(x+\delta)) = x+\delta$, and the reconstruction error is $\|f(A(x+\delta)) -x\|_2 = \epsilon$, that is, for small $\epsilon$, the perturbation has negligible effect. In contrast, for the same perturbation $\delta$ in $y$, the reconstruction is $f(Ax+\delta) = x + [0,\epsilon/r]^T$, with reconstruction error $\|f(A(x+\delta)) -x\|_2 = \epsilon/r $, which can be arbitrarily large if $r \rightarrow 0$. This aspect is completely missed by the formulation based on \eqref{eq:x-space}.
    \item For inverse problems, one also wants robustness to perturbations in the measurement matrix $A$. Suppose $A$ used in training is slightly different from the actual $A' = A+ \tilde{A}$ that generates the measurements. This results in  perturbation $\tilde{A}x$ in $y$-space, which may be outside the range space of $A$, and therefore, as in 1 above,  may not be possible to capture by the formulation based on \eqref{eq:x-space}.
\end{enumerate}
The above points indicate that studying the problem of robustness to perturbations for image reconstruction problems in $x$-space misses  possible perturbations in $y$-space that can have a huge adversarial effect on reconstruction. Since many of the image reconstruction problems are ill-posed or ill-conditioned, we formulate and study the issue of adversaries in the $y$-space, which is more generic and able to handle perturbations in the measurement operator $A$ as well.
\subsection{Image Reconstruction}
Image Reconstruction deals with recovering the clean image $x$ from noisy and possibly incomplete measurements $y = Ax+v$. Recently, deep-learning-based approaches have outperformed the traditional  techniques. Many deep learning architectures are inspired by iterative reconstruction schemes  \cite{rick2017one, raj2019gan, bora2017compressed, wen2019transform}. Another  popular way is to use an end-to-end deep network to solve the image reconstruction problem directly \cite{jin2017deep, zhu2018image, schlemper2017deep, yang2017dagan, hammernik2018learning, sajjadi2017enhancenet, yao2019dr2}. In this work, we propose modification in the training scheme for the end-to-end networks. \\
Consider the standard MSE loss in  $x$-space with the popular 
 $\ell_2$-regularization on the weights (aka weight decay), which mitigates overfitting and helps in generalization \cite{krogh1992simple} 
\begin{equation} \label{eq:wd_regul}
    \min_{\theta} \mathbb{E}_{x} \|f(Ax; \theta) - x\|^2 + \mu \|\theta\|^2
\end{equation}
In this paper, we experiment both with $\mu >0$ (regularization present) and $\mu =0$ (no regularization). No regularization is used in the sequel, unless stated otherwise.
\subsubsection{Adversarial Training for Image Reconstruction}
Motivated by the adversarial training strategy \eqref{minmax_gen}, several frameworks have been proposed recently to make classification by deep networks more robust \cite{jang2019adversarial, kurakin2016adversarial, wang2019direct}. For image reconstruction, we propose to modify the training loss to the general form
\begin{equation}
    \label{img_recon_1}
    \min_{\theta} \mathbb{E}_{x} \max_{\delta: \|\delta\|_p \leq \epsilon} \|f(Ax; \theta) - x \|^2 + \lambda \|f(Ax + \delta; \theta) - x\|^2 \nonumber
\end{equation}
The role of the first term is to ensure that the network $f$ maps the non-adversarial measurement to the true $x$, while the role of the second term is to train $f$ on worst-case adversarial examples within the $\ell_p$-norm ball around the nominal measurement $Ax$. We want $\delta$ to be the worst case perturbation for a given $f$. However, during the initial training epochs, $f$ is mostly random (assuming random initialization of the weights) resulting in random perturbation, which makes $f$ diverge. Hence we need only the first term during initial epochs to get a decent $f$ that provides reasonable reconstruction. Then, reasonable perturbations are obtained by activating the second term, which results in robust $f$. \\
Now, solving the min-max problem above is intractable for a large dataset as it involves finding the adversarial example, which requires to solve the inner maximization for each $y = Ax$. This may be done 
using projected gradient descent (PGD), but  is very costly. A possible sub-optimal approximation (with $p = 2$) for this formulation is:
\begin{equation}
    \label{eq:img_recon_2}
    \min_{\theta} \max_{\delta: \|\delta\|_2 \leq \epsilon} \mathbb{E}_{x} \|f(Ax; \theta) - x \|_2^2 + \lambda \|f(Ax + \delta; \theta) - x\|_2^2
\end{equation}
This formulation finds a common $\delta$ which is adversarial to each measurement $y$ and tries to minimize the reconstruction loss for the adversarial examples together with that for clean examples. Clearly this is sub-optimal as using a perturbation $\delta$ common to all $y$'s need not be the worst-case perturbation for any of the $y$'s, and optimizing for the common $\delta$ won't result in a highly robust network. \\
Ideally, we would want the best of both worlds: i.e., to generate $\delta$ for each $y$ independently, together with tractable training. To this end, we propose to parameterize the worst-case perturbation $\delta = \argmax_{\delta: \|\delta\|_2 \leq \epsilon} \|f(y + \delta; \theta) - x\|_2^2$ by a deep neural network $G(y; \phi)$. This also eliminates the need of solving the inner-maximization to find $\delta$ using hand-designed methods. Since $G(\cdot)$ is parameterized by $\phi$ and takes $y$ as input, a well-trained $G$ will result in optimal perturbation for the given $y = Ax$. The modified loss function becomes:
\begin{align*}
    \min_{\theta} \max_{\phi: \|G(\cdot, \phi)\|_2 \leq \epsilon} & \mathbb{E}_{x} \|f(Ax; \theta) - x \|^2 \\
     & + \lambda \|f(Ax + G(Ax; \phi); \theta) - x\|^2 
\end{align*}
This results in an adversarial game between the two networks: $G$ and $f$, where $G$'s goal is to generate strong adversarial examples that maximize the reconstruction loss for the given $f$, while $f$ tries to make itself robust to the adversarial examples generated by the $G$. This framework is illustrated  in the Fig.~\ref{fig:arch}. This min-max setting is quite similar to the Generative adversarial network (GAN), with the difference in the objective function. Also, here, the main goal is to build an adversarially robust $f$, which requires some empirical changes compared to standard GANs to make it work. Another change is to reformulate the constraint $\|G(\cdot, \phi)\|_2 \leq \epsilon$ into a penalty form using the hinge loss, which makes the training more tractable:
\begin{align}
    \min_{\theta} \max_{\phi} \quad & \mathbb{E}_{x} \|f(Ax; \theta) - x \|^2 \nonumber \\
     & + \lambda_1 \|f(Ax + G(Ax; \phi); \theta) - x\|^2   \nonumber \\
     & \quad + \lambda_2  \max \{0, \|G(Ax; \phi)\|_2^2 - \epsilon \} \label{eq: img_recon_3} 
\end{align}
Note that $\lambda_2$ must be negative to satisfy the required constraint $\|G(\cdot, \phi)\|_2 \leq \epsilon$.
\begin{figure}[h]
    \centering
    \includegraphics[width=0.5\textwidth]{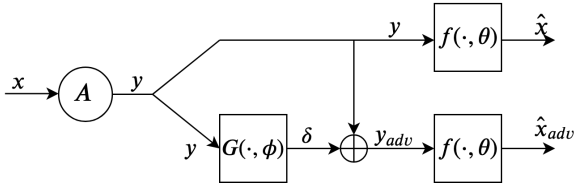}
    \caption{Adversarial training framework of image reconstruction network $f$, jointly with another network $G$, generating the additive perturbations}
    \label{fig:arch}
\end{figure}
\subsubsection{Training Strategy}
We apply some modifications and intuitive changes to train a robust $f$ jointly with training $G$ in a mini-batch set-up. At each iteration, we update $G$ to generate adversarial examples and train  $f$ using those adversarial examples along with the non-adversarial or clean samples to make it robust. Along with the training of robust $f$, $G$ is being trained to generate worst-case adversarial examples. To generate strong adversarial examples by $G$ in the mini-batch update, we divide each mini-batch into $K$ sets. Now, $G$ is trained over each set independently and we use adversarial examples after the update of $G$ for each set. This fine-tunes $G$ for the small set to generate stronger perturbations for every image belonging to the set. Then, $f$ is trained using the entire mini-batch at once but with the adversarial examples generated set-wise. $G$ obtained after the update corresponding to the $K^{th}$ set is passed for the next iteration or mini-batch update. This is described in Algorithm \ref{alg:1}.
\begin{algorithm}[t]
\caption{Algorithm for training at iteration $T$}
\label{alg:1}
\textbf{Input}: Mini-batch samples $(x_T, y_T)$, $G_{T-1}$, $f_{T-1}$ \\
\textbf{Output}: $G_T$ and $f_T$
\begin{algorithmic}[1] 
\STATE $G_{T, 0} = G_{T-1}$, $f = f_{T-1}$ Divide mini-batch into $K$ parts.
\WHILE{$k \leq K$}
\STATE $x = x_{T, k}, G = G_{T, k-1}$
\STATE $G_{T, k} = \argmax_{G} \lambda_1 \|f_{T-1}(Ax + G(Ax; \phi); \theta) -  x\|^2 + \lambda_2  \max \{0, \|G(Ax; \phi)\|_2^2 - \epsilon \}$
\STATE $\delta_{T, k} = G_{T, k}(x)$
\ENDWHILE
\STATE $\delta_T = [\delta_{T,1}, \delta_{T,2}, ..., \delta_{T,K}]$
\STATE $f_{T} = \argmin_{f} \|f(Ax_T) - x_T\|^2 + 
       \lambda_1 \|f(Ax_T + \delta_T) - x_T\|^2$
\STATE $G_T = G_{T, K}$
\STATE \textbf{return} $G_T, f_T$
\end{algorithmic}
\end{algorithm}
\subsection{Robustness Metric}
\label{sec: eval}
We define a  metric to compare the robustness of different networks. We measure the following quantity for network $f$:
\begin{equation} \label{eq:delta_x0}
    \Delta_{\text{max}}(x_0, \epsilon) = \max_{\|\delta\|_2 \leq \epsilon}\|f(Ax_0 + \delta) - x_0\|^2 
\end{equation}
This determines the reconstruction error due to the worst-case additive perturbation over an $\epsilon$-ball around the nominal measurement $y = Ax_0$ for each image $x_0$. The final robustness metric for $f$  is $\rho(\epsilon) = \mathbb{E}_{x_0}[\Delta_{\text{max}}(x_0, \epsilon)]$, which we estimate by the sample average of $\Delta_{\text{max}}(x_0, \epsilon)$ over a test dataset, 
\begin{equation}
    \hat{\rho}(\epsilon) = \frac{1}{N}\sum_{i=1}^{N} \Delta_{\text{max}}(x_i, \epsilon) \label{eq: rho}
\end{equation}
The smaller $\hat{\rho}$, the more robust the network.\\
We solve the optimization problem in \eqref{eq:delta_x0}  using projected gradient ascent (PGA) with momentum (with parameters selected empirically).
Importantly, unlike training, where computation of $\Delta_{\text{max}}(x_0)$ is required at every epoch, we need to solve \eqref{eq:delta_x0}
only once for every sample $x_i$ in the test set, making this computation feasible during testing.

\section{Theoretical Analysis}
\label{sec: theory_ana}
We theoretically obtained the optimal solution for the min-max formulation in \eqref{eq:img_recon_2} for a simple linear reconstruction. Although this analysis doesn't  extend easily to the non-linear deep learning based reconstruction, it gives some insights for the behavior of the proposed formulation and how it depends on the conditioning of the measurement matrices.
\begin{theorem} \label{theorem}
    Suppose 
    that the reconstruction network $f$ is a one-layer feed-forward network with no non-linearity i.e., $f = B$, where matrix $B$  has SVD: $B = MQP^T$. Denote the SVD of the measurement matrix $A$ by $A=USV^T$, where $S$ is a diagonal matrix with singular values in permuted \emph(increasing) order, and assume that the data is normalized, i.e., $E(x) = 0$ and $cov(x) = I$. Then the optimal $B$ obtained by solving \eqref{eq:img_recon_2} is a modified pseudo-inverse of $A$, with $M = V$, $P = U$ and $Q$ a filtered inverse of $S$, given by the diagonal matrix
    \begin{align} \label{eq:theor_res}
       Q & = \diag \left ( q_m, \ldots , q_m , 1/S_{m+1} , \ldots , 1/S_{n} \right), \nonumber \\ 
       q_m &= \frac{\sum_{i=1}^m S_i}{ \sum_{i=1}^m S_i^2 +  \frac{\lambda}{1+\lambda}\epsilon^2}
    \end{align}
 with largest entry $q_m$ of multiplicity $m$ that depends on $\epsilon$, $\lambda$ and $\{S_i\}_{i=1}^n$.
 \end{theorem}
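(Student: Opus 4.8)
The plan is to reduce the stochastic min--max problem \eqref{eq:img_recon_2} to a small deterministic convex program in the matrix $B$, prove a lower bound for it that decouples across the singular directions of $A$, and then solve a scalar optimization whose minimizer is exactly \eqref{eq:theor_res}. \textbf{Step 1 (reduction).} I would first substitute $f=B$, expand the integrand as $(1+\lambda)\|(BA-I)x\|^2 + 2\lambda\langle (BA-I)x,\,B\delta\rangle + \lambda\|B\delta\|^2$, and take $\mathbb{E}_x$: the assumption $E(x)=0$ annihilates the cross term, while $\mathrm{cov}(x)=I$ turns $\mathbb{E}_x\|(BA-I)x\|^2$ into $\|BA-I\|_F^2$. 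The inner maximization is then in closed form, $\max_{\|\delta\|_2\le\epsilon}\lambda\|B\delta\|^2=\lambda\epsilon^2\sigma_{\max}(B)^2$, so after dividing by $1+\lambda$ the problem becomes
\begin{equation*}
\min_{B}\ \|BA-I\|_F^2 \;+\; c\,\sigma_{\max}(B)^2, \qquad c:=\tfrac{\lambda}{1+\lambda}\epsilon^2 ,
\end{equation*}
which is convex since $\sigma_{\max}(\cdot)^2$ is convex; in particular there is no minimax subtlety.

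\textbf{Step 2 (decoupling lower bound).} Writing $A=USV^T$ and setting $\tilde B:=BU$ keeps $\sigma_{\max}(B)=\sigma_{\max}(\tilde B)$, and right-multiplying by the orthogonal $V$ gives $\|BA-I\|_F^2=\|\tilde B S - V\|_F^2=\sum_{j=1}^n\|S_j\tilde b_j-v_j\|^2$, where $\tilde b_j,v_j$ are the $j$-th columns of $\tilde B,V$. Cauchy--Schwarz (with $\|v_j\|=1$) gives $\|S_j\tilde b_j-v_j\|^2\ge(S_j\|\tilde b_j\|-1)^2$, with equality iff $\tilde b_j$ is a nonnegative multiple of $v_j$; also $\sigma_{\max}(\tilde B)\ge\|\tilde B e_j\|=\|\tilde b_j\|$ for every $j$. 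Hence, with $\beta_j:=\|\tilde b_j\|$,
\begin{equation*}
\|BA-I\|_F^2 + c\,\sigma_{\max}(B)^2 \;\ge\; \sum_{j=1}^n(S_j\beta_j-1)^2 + c\,\max_j\beta_j^2 \;\ge\; \min_{\beta\ge0}\Big[\textstyle\sum_{j=1}^n(S_j\beta_j-1)^2 + c\,\max_j\beta_j^2\Big].
\end{equation*}
Then I would check this bound is attained by $B=V\diag(\beta)U^T$ for the optimal $\beta$: this aligns each column ($\tilde b_j=\beta_j v_j$) and makes the singular values of $B$ exactly $\{\beta_j\}$, so $\sigma_{\max}(B)=\max_j\beta_j$. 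This is what pins down $M=V$, $P=U$, and $Q=\diag(\beta^\star)$.

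\textbf{Step 3 (scalar optimization).} Finally I would minimize $\Phi(\beta)=\sum_{j}(S_j\beta_j-1)^2+c\max_j\beta_j^2$ over $\beta\ge0$: fixing $t=\max_j\beta_j$, each $(S_j\beta_j-1)^2$ is minimized on $[0,t]$ at $\beta_j=\min(1/S_j,t)$, and since $S_1\le\cdots\le S_n$ the clipped coordinates are precisely $j=1,\dots,m$ with $m=m(t)=\#\{j:S_j<1/t\}$. On each interval where $m$ is constant, $\Phi$ equals the quadratic $\sum_{i\le m}(S_i t-1)^2+ct^2$ in $t$, minimized at $t^\star=(\sum_{i\le m}S_i)/(\sum_{i\le m}S_i^2+c)=q_m$; the correct $m$ is the unique one for which $t^\star$ is self-consistent ($S_m<1/q_m\le S_{m+1}$), which also forces $q_m\ge 1/S_{m+1}\ge\cdots\ge 1/S_n$, so $q_m$ is the largest entry of $Q$ with multiplicity $m$ and $\beta^\star=(q_m,\dots,q_m,1/S_{m+1},\dots,1/S_n)$, i.e.\ \eqref{eq:theor_res}.

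\textbf{Expected obstacle.} The one non-routine move is Step 2: spotting the substitution $\tilde B=BU$ that simultaneously diagonalizes the data term column-by-column while leaving the spectral penalty invariant, and verifying that the per-column Cauchy--Schwarz bound and the bound $\|\tilde b_j\|\le\sigma_{\max}(B)$ become tight at the \emph{same} matrix $B=V\diag(\beta^\star)U^T$. Everything else is bookkeeping, provided one is careful about the self-consistent choice of the multiplicity $m$, about repeated singular values (where $V$, and hence the optimal $B$, is determined only up to the usual SVD ambiguity), and about the implicit standing assumption that $A$ has full column rank so that $1/S_j$ is well defined.
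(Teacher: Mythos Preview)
Your proof is correct and your Step~1 coincides with the paper's reduction, but from there you take a genuinely different route. The paper parametrizes $B$ by its SVD $B=MQP^T$ up front and then optimizes sequentially over the orthogonal factors: it fixes $Q,P$ and argues via a Lagrangian/KKT relaxation that $M=V$ is optimal, substitutes, and then argues (by inspecting a ratio of sums) that $C=U^TP$ must be diagonal, forcing $P=U$; only then does it arrive at the same scalar problem in the diagonal entries that you solve in Step~3. Your Step~2 replaces all of that constrained optimization with a one-line lower bound: the substitution $\tilde B=BU$, the per-column reverse triangle inequality $\|S_j\tilde b_j-v_j\|\ge|S_j\|\tilde b_j\|-1|$, and the trivial bound $\sigma_{\max}(\tilde B)\ge\max_j\|\tilde b_j\|$, together with the observation that $B=V\diag(\beta)U^T$ attains equality in both simultaneously. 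This is more elementary and, frankly, more airtight than the paper's argument, whose KKT step does not verify dual feasibility and whose ``set the off-diagonal of $C$ to zero'' move does not check compatibility with the orthogonality constraint on $P$. What the paper's approach buys is a more mechanical, step-by-step derivation that makes the SVD alignment $M=V$, $P=U$ explicit at each stage; what your approach buys is a shorter, convexity-based proof with an explicit tightness certificate. Your caveats about the self-consistent choice of the multiplicity $m$, about SVD non-uniqueness under repeated singular values, and about the implicit assumption $S_j>0$ for $j>m$ are all well placed.
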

\begin{proof}
    Please refer to the appendix \ref{appen} for the proof.
\end{proof}
The modified inverse $B$ reduces the effect of ill-conditioning in $A$ for adversarial cases in the reconstruction. This can be easily understood, using the simple example from the equation \ref{ill_cond_2x2}. As explained previously, for the $A$ in \eqref{ill_cond_2x2} with $|r| <1$, an exact inverse, $f = \begin{bmatrix}
1 & 0 \\
0 & \frac{1}{r}
\end{bmatrix}$, amplifies the perturbation.  Instead the min-max formulation \eqref{eq:img_recon_2} (with $\lambda = 1$) results in a modified pseudo inverse $\hat{f} = \begin{bmatrix}
1 & 0 \\
0 & \frac{r}{r^2 + 0.5\epsilon^2}
\end{bmatrix}$, suppressing the effect of an adversarial perturbation $\delta=[0, \epsilon]^T$ in $y$ as $\|f\delta\| \gg \|\hat{f}\delta\|$ for $r \rightarrow 0$ and $\epsilon \nrightarrow 0$. It can also be seen that $\hat{f}$ won't be optimal the for the unperturbed $y$ as it's not actual an inverse and reconstruction loss using $f$ for unperturbed case would be smaller than that for $\hat{f}$. However, for even very small adversaries, $f$ would be much more sensitive than $\hat{f}$. It shows the trade-off between the perturbed and unperturbed case for the reconstruction in the case of ill-conditioned $A$. \\ This trade-off behavior will not manifest for  a well-conditioned, as an ideal linear inverse $f$ for this case won't amplify the small perturbations and a reconstruction obtained using \eqref{eq:img_recon_2} with linear $\hat{f}$ will be very close to $f$  (depending on $\epsilon$): for well-conditioned $A$, $r \nrightarrow 0$. In that case $r^2 \gg 0.5 \epsilon^2$, which reduces $\hat{f}$ to $f$.\\
Our experiments with deep-learning-based non-linear image reconstruction methods for CS using as sensing matrices random rows of a Gaussian matrix (well-conditioned) vs. random rows of a  DCT matrix (relatively ill-conditioned) indeed show the qualitatively different behavior with increasing amount of perturbations.
\section{Experiments}
\textbf{Network Architecture:} For the reconstruction network $f$, we follow the architecture of deep convolutional networks for image reconstruction. They use multiple convolution, deconvolution and ReLU layers, and use batch normalization and dropout for better generalization. As a pre-processing step, which has been found to be  effective  for reconstruction, we apply the transpose (adjoint) of $A$ to the measurement $y$, feeding $A^Ty$ to the network. This transforms the measurement into the image-space, allowing the network to operate purely in image space. \\
\begin{table*}[t]
    \centering 
    \hspace{-2.5em}
        \begin{tabular}{c  c}
        \centering
            \begin{subfigure}[b]{0.5\linewidth}
            \centering
            \includegraphics[width=1\textwidth]{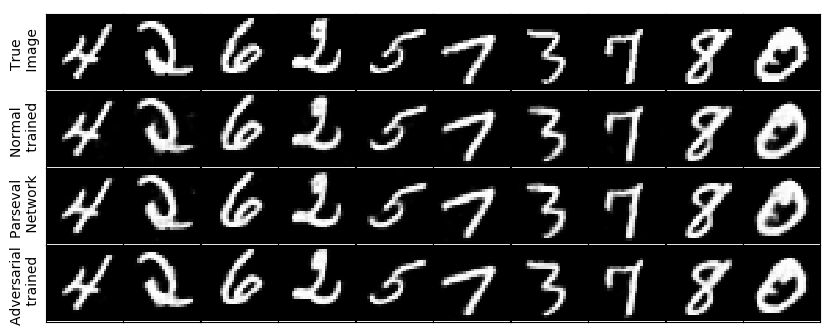}
            \caption{$\epsilon = 0$} \label{mnist_gauss_a}
            \end{subfigure}
            & 
             \begin{subfigure}[b]{0.5\linewidth}
            \centering
            \includegraphics[width=1\textwidth]{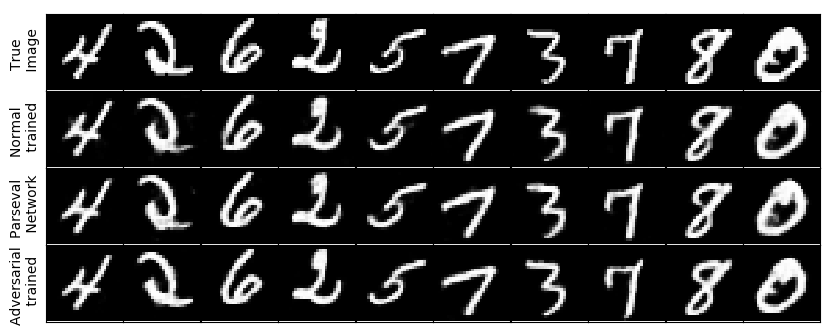}
            \caption{$\epsilon = 1.0$} \label{mnist_gauss_b}
            \end{subfigure}
            \\
            \begin{subfigure}[b]{0.5\linewidth}
            \centering
            \includegraphics[width=1\textwidth]{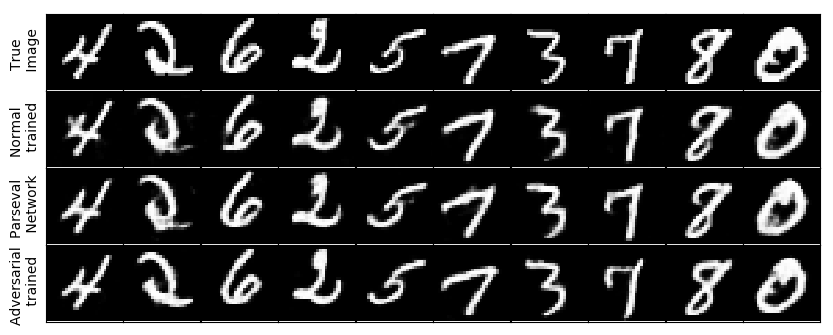}
            \caption{$\epsilon = 2.0$} \label{mnist_gauss_c}
            \end{subfigure} 
            &
            \begin{subfigure}[b]{0.5\linewidth}
            \centering
            \includegraphics[width=1\textwidth]{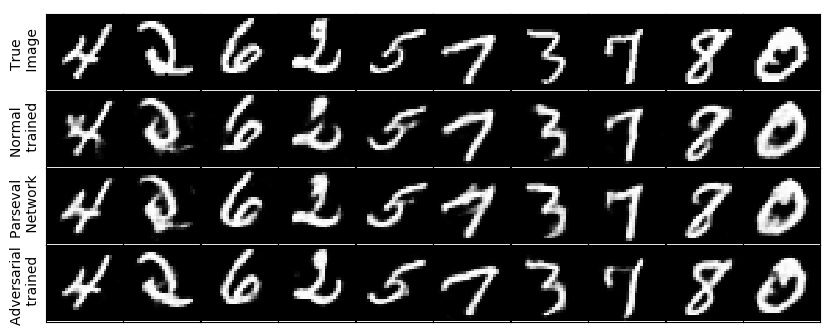}
            \caption{$\epsilon = 3.0$} \label{mnist_gauss_d}
            \end{subfigure}
        \end{tabular} 
    \captionof{figure}{Qualitative Comparison for the MNIST dataset for different perturbations. \emph{First row} of each sub-figure corresponds to the true image, \emph{Second row} to the reconstruction using normally trained model, \emph{Third row} to the reconstruction using Parseval Network, \emph{Fourth row} to the reconstruction using the adversarially trained model (\emph{\textbf{proposed scheme}}).} \label{mnist_gauss_comp}
\end{table*}
For the adversarial perturbation generator $G$ we use a standard feed-forward network, which takes input $y$ as input. The network consists of multiple fully-connected and ReLU layers. We trained the architecture shown in fig. \ref{fig:arch} using the objective defined in the \eqref{eq: img_recon_3}. \\
We designed networks of similar structure but different number of layers for the two datasets, MNIST and CelebA used in the experiments.\\
We used the Adam Optimizer with $\beta_1 = 0.5$, $\beta_2 = 0.999$, learning rate of $10^{-4}$ and mini-batch size of $128$, but divided into $K=4$ parts during the update of $G$, described in the algorithm \ref{alg:1}. During training, the size $\epsilon$ of the perturbation has to be neither too big (affects performance on clean samples) nor too small (results in less robustness). We empirically picked $\epsilon = 2$ for MNIST and $\epsilon = 3$ for the CelebA datasets. However, during testing, we evaluated $\hat{\rho}$, defined in \eqref{eq: rho} for different $\epsilon$'s (including those not used while training), to obtain a fair assessment of robustness.\\
We compare the adversarially trained model using the min-max formulation defined in the objective \ref{eq: img_recon_3}, with three models trained using different training schemes:
\begin{enumerate}
\itemsep0em 
    \item Normally trained model with no regularization, i.e., $\mu=0$ in \eqref{eq: img_recon_3}.
    \item $\ell_2$-norm weight regularized model, using \eqref{eq:wd_regul} with $\mu >10^{-6}$ (aka weight decay), chosen empirically to avoid over-fitting and improve robustness and generalization of the network.
    \item Lipschitz constant ($\mathcal{L}$)-constrained Parseval network \cite{cisse2017parseval}. The idea is to constrain the overall Lipschitz constant $\mathcal{L}$ of the network to be $\leq 1$, by making $\mathcal{L}$ of every layer, $\leq 1$. Motivated by the idea that regularizing the spectral norm of weight matrices could help in the context of robustness, this approach proposes to constrain the weight matrices to also be orthonormal, making them \textit{Parseval tight frames}. Let $S_{fc}$ and $S_{c}$ define the set of indices for fully-connected and convolutional layers respectively. The regularization term to penalize the deviation from the constraint is
    \begin{equation}
        \hspace{-0.8em}\frac{\beta}{2}(\sum_{i \in S_{fc}} \|W_i^TW_i - I_i\|_2^2 + \sum_{j \in S_{c}}\|\mathbf{W_j}^T\mathbf{W_j} - \frac{I_j}{k_j}\|_2^2)
    \end{equation}
    where $W_i$ is the weight matrix for $ith$ fully connected layer and $\mathbf{W_j}$ is the transformed or unfolded weight matrix of $jth$ convolution layer having kernel size $k_j$. This transformation requires input to the convolution to shift and repeat $k_j^2$ times. Hence, to maintain the \textit{Parseval tight frames} constraint on the convolution operator, we need to make $\mathbf{W_j}^T\mathbf{W_j} \approx \frac{I_j}{k_j}$. $I_i$ and $I_j$ are identity matrices whose sizes depend on the size of $W_i$ and $\mathbf{W_j}$ respectively. $\beta$ controls the weight given to the  regularization compared to the standard reconstruction loss. Empirically, we picked $\beta$ to be $10^{-5}$.
\end{enumerate}
To compare different training schemes, we follow the same scheme (described below) for each datasets. Also, we extensively compare the performance for the two datasets for Compressive Sensing (CS) task using  two matrices: one  well-conditioned and another, relatively ill-conditioned. This comparison complements the theoretical analysis, discussed in the previous section.\\
\begin{table*}[t]
    \centering
    \hspace{-2em}
        \begin{tabular}{c c}
            \begin{subfigure}[b]{0.5\linewidth}
            \centering
            \includegraphics[width=1\textwidth]{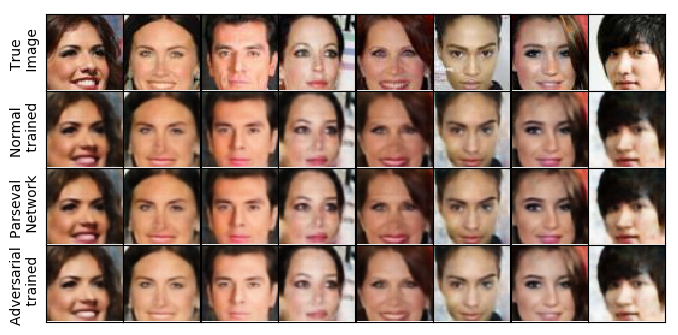}
            \caption{$\epsilon = 0$} \label{celeb_gauss_a}
            \end{subfigure} &
            \begin{subfigure}[b]{0.5\linewidth}
            \centering
            \includegraphics[width=1\textwidth]{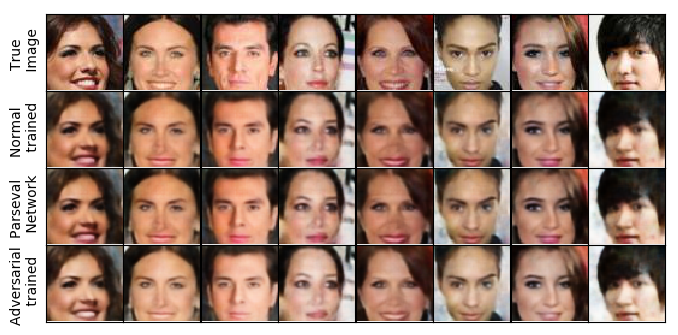}
            \caption{$\epsilon = 2.0$} \label{celeb_gauss_b}
            \end{subfigure} \\
            \begin{subfigure}[b]{0.5\linewidth}
            \centering
            \includegraphics[width=1\textwidth]{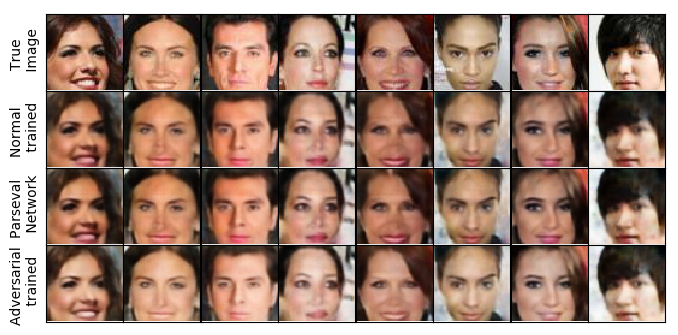}
            \caption{$\epsilon = 5.0$} \label{celeb_gauss_c}
            \end{subfigure} &
            \begin{subfigure}[b]{0.5\linewidth}
            \centering
            \includegraphics[width=1\textwidth]{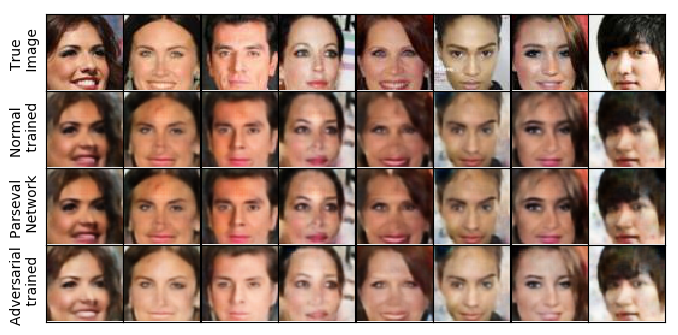}
            \caption{$\epsilon = 10.0$} \label{celeb_gauss_d}
            \end{subfigure}
        \end{tabular}
        \captionof{figure}{Qualitative Comparison for the CelebA dataset for different perturbations. \emph{First row} of each sub-figure corresponds to the true image, \emph{Second row} to the reconstruction using normally trained model, \emph{Third row} to the reconstruction using Parseval Network, \emph{Fourth row} to the reconstruction using the adversarially trained model (\emph{\textbf{proposed scheme}}).} \label{celeb_gauss_comp}
\end{table*}
The \textbf{MNIST} dataset \cite{lecun1998gradient} consists of $28\times28$ gray-scale images of digits with $50,000$ training and $10,000$ test samples. The image reconstruction network consists of $4$ convolution layers and $3$ transposed convolution layers using re-scaled images between $[-1, 1]$. For the generator $G$, we used 5 fully-connected layers network. Empirically, we found  $\lambda_1 = 1$ and $\lambda_2 = -0.1$ in \eqref{eq: img_recon_3}, gave the best performance in terms of robustness (lower $\hat{\rho}$) for different perturbations.\\
The \textbf{CelebA} dataset \cite{liu2015faceattributes} consists of more than $200,000$ celebrity images. We use the aligned and cropped version, which pre-processes each image to a size of $64\times64\times 3$ and scaled between $[-1, 1]$. We randomly pick $160,000$ images for the training. Images from the $40,000$ held-out set are used for evaluation. The image reconstruction network consists of $6$ convolution layers and $4$ transposed convolution layers. For the generator $G$, we used a 6 fully-connected layers network. We found  $\lambda_1 = 3$ and $\lambda_2 = -1$ in \eqref{eq: img_recon_3} gave the best robustness performance (lower $\hat{\rho}$) for different perturbations. 
\subsection{Gaussian Measurement matrix} \label{sec:gauss_mat}
In this set-up, we use the same measurement matrix $A$ as \cite{bora2017compressed, raj2019gan}, i.e. $A_{i, j} \sim N(0, 1/m)$ where $m$ is the number of measurements. For MNIST, the measurement matrix $A \in R^{m\times784}$, with $m = 100$, whereas for CelebA, $A\in R^{m\times12288}$, with $m=1000$. Figures \ref{mnist_gauss_comp} and \ref{celeb_gauss_comp} show the qualitative comparisons for the MNIST and CelebA reconstructions respectively, by solving the optimization described in Section \ref{sec: eval}. It can be seen clearly in both the cases that for different $\epsilon$ the adversarially trained models outperform the normally trained and Parseval networks. For higher $\epsilon$'s, the normally trained and Parseval models generate significant artifacts, which are much less for the adversarially trained models. Figures Fig.~\ref{fig:MNIST gauss} and Fig.~\ref{fig:Celeb gauss} show this improvement in performance in terms of the quantitative metric $\hat{\rho}$, defined in \eqref{eq: rho} for the MNIST and CelebA datasets respectively. It can be seen that $\hat{\rho}$ is lower for the adversarially-trained models compared to other training methods: no regularization, $\ell_2$-norm regularization on weights, and Parseval networks (Lipschitz-constant-regularized) 
for different $\epsilon$'s, showing that adversarial training using the proposed min-max formulation indeed outperforms other approaches in terms of robustness. It is noteworthy that even for $\epsilon = 0$, adversarial training reduces the reconstruction loss, indicating that it acts like an excellent regularizer in general.
\subsection{Discrete Cosine Transform (DCT) matrix}
To empirically study the effect of conditioning of the matrix, we did experiment by choosing $A$ as random $m$ rows and $n$ columns of a $p\times p$ DCT matrix, where $p > n$. This makes $A$  relatively more ill-conditioned than the random Gaussian $A$, i.e. the condition number for the random DCT matrix is higher than that of random Gaussian one. The number of measurements has been kept same as the previous case, i.e. $(m=100,\text{ }n=784)$ for MNIST and $(m=1000,\text{ }n=12288)$ for CelebA. We trained networks having the same configuration as the Gaussian ones. Fig.~\ref{fig:gauss dct} shows the comparison for the two measurement matrices. Based on the figure, we can see that $\hat{\rho}$ for the DCT, MNIST (Fig.~\ref{fig: MNIST DCT}) and CelebA (Fig.~\ref{fig: celeb DCT}), are very close for  models trained adversarially and using other schemes for the unperturbed case ($\epsilon = 0$), but the gap between them increases with increasing $\epsilon$'s, with adversarially trained models outperforming the other methods consistently. This behavior is qualitatively different from that for the Gaussian case (Fig.~\ref{fig:MNIST gauss} and Fig.~\ref{fig:Celeb gauss}), where the gap between adversarially trained networks and models trained using other (or no) regularizers is roughly constant for different $\epsilon$.
\subsection{Analysis with respect to Conditioning}
To check the conditioning, Fig.\ref{fig:svd gauss} shows the histogram for the singular values of the random Gaussian matrices. It can be seen that the condition number (ratio of maximum and minimum singular value) is close to $2$ which is very well conditioned 
for both data sets. On the other hand, the histogram of the same for the random DCT matrices (Fig.\ref{fig: svd DCT}) shows higher condition numbers -- $8.9$ for the $100\times784$ and $7.9$ for the $1000\times12288$ dimension matrices, which is ill-conditioned relative to the Gaussian ones.\\
Refering to the above analysis of conditioning and plots of the robustness measure $\hat{\rho}$ for the two types of matrices: random Gaussian vs.  random DCT indicate that the performance and behavior of the proposed min-max formulation depends on how well (or relatively ill)-conditioned the matrices are. This corroborates with the theoretical analysis for a simple reconstruction scheme (linear network) described in  Sec.~\ref{sec: theory_ana}.
\begin{figure*} [t]
    \centering
    \begin{subfigure}[b]{.32\linewidth}
    \centering
    \includegraphics[width=.99\textwidth]{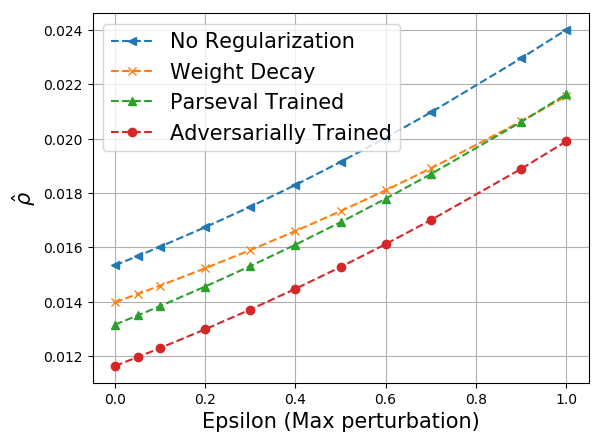}
    \caption{}\label{fig:MNIST gauss}
    \end{subfigure}
    \begin{subfigure}[b]{.345\linewidth}
    \centering
    \includegraphics[width=.99\textwidth]{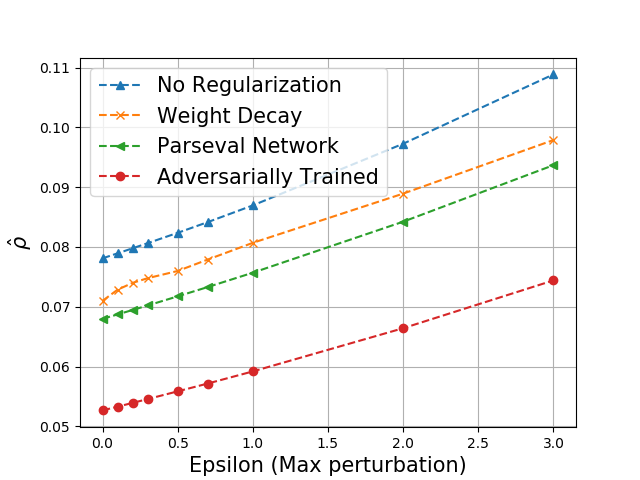}
    \caption{}\label{fig:Celeb gauss}
    \end{subfigure}
    \begin{subfigure}[b]{.32\linewidth}
    \centering
    \includegraphics[width=.99\textwidth]{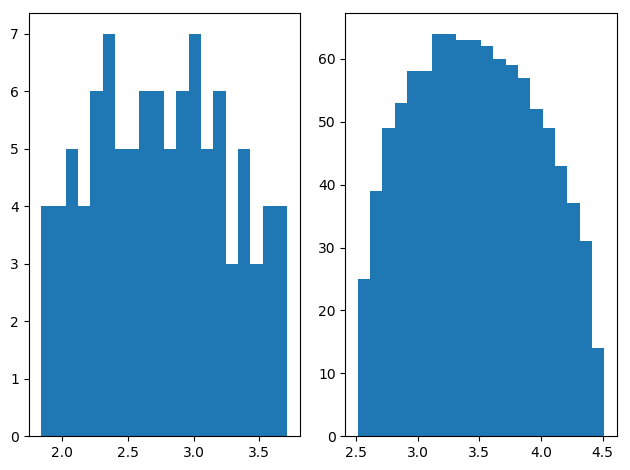}
    \caption{}\label{fig:svd gauss}
    \end{subfigure}\\
    \begin{subfigure}[b]{.32\linewidth}
    \centering
    \includegraphics[width=.99\textwidth]{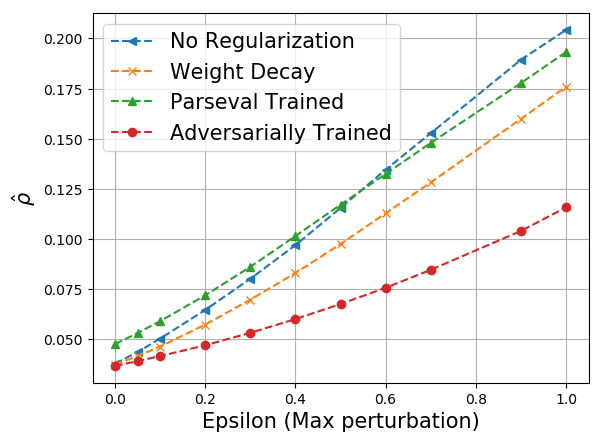}
    \caption{}\label{fig: MNIST DCT}
    \end{subfigure}
    \begin{subfigure}[b]{.345\linewidth}
    \centering
    \includegraphics[width=.99\textwidth]{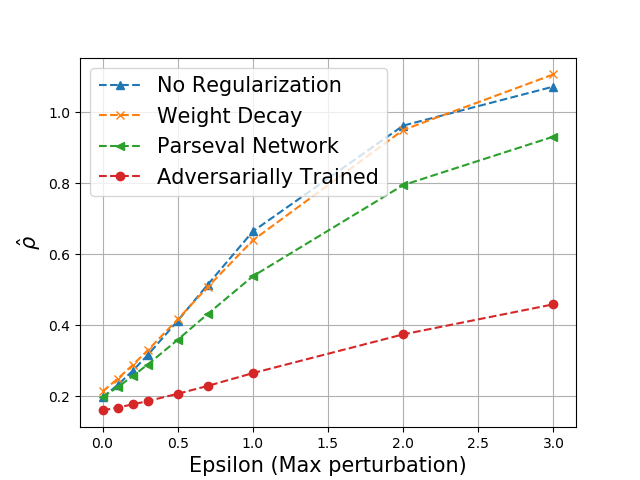}
    \caption{}\label{fig: celeb DCT}
    \end{subfigure}
    \begin{subfigure}[b]{.32\linewidth}
    \centering
    \includegraphics[width=.99\textwidth]{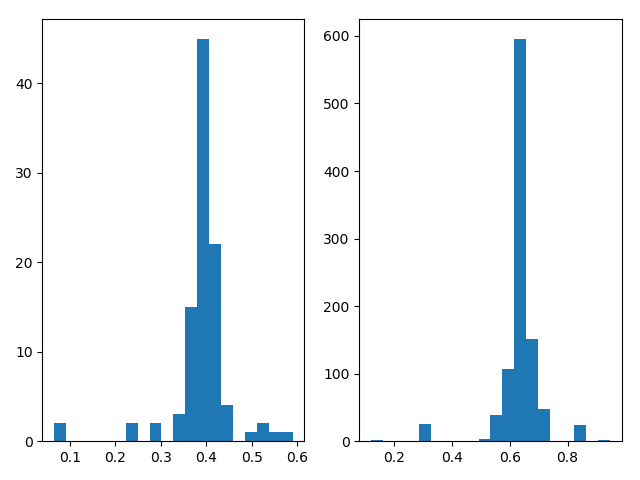}
    \caption{}\label{fig: svd DCT}
    \end{subfigure}\\
    \caption{\textbf{Row 1} corresponds to the random rows of Gaussian measurement matrix: (a) MNIST, (b) CelebA, (c) Distribution of the singular values for MNIST (left, $m = 100$) and CelebA (right, $m=1000$) cases. \textbf{Row 2} corresponds to random rows of the DCT measurement matrix: (a) MNIST, (b) CelebA, (c) Distribution of the singular values for MNIST (left, $m = 100$) and CelebA (right, $m=1000$) cases.
    }
    \label{fig:gauss dct}
\end{figure*}
\subsection{Linear Network for Reconstruction}
We perform an experiment using a linear reconstruction network in a simulated set-up to compare the theoretically obtained optimal robust reconstruction network with the one learned by our scheme  by optimizing the objective \eqref{eq:img_recon_2}. We take $50,000$ samples of a  signal $x \in \mathbb{R}^{20}$ drawn from $\mathcal{N}(0, I)$, hence, $\mathbb{E}(x) = 0 \text{ and } cov(x) = I$. For the measurement matrix $A \in  \mathbb{R}^{{10} \times 20}$, we follow the same strategy as in Sec.~\ref{sec:gauss_mat}, i.e. $A_{ij} \sim \mathcal{N}(0, 1/10)$. 
Since such matrices are well-conditioned, we replace $2$ singular values of $A$ by small values (one being $10^{-3}$ and another, $10^{-4}$) keeping other singular values and singular matrices fixed.  This makes the modified matrix $\tilde{A}$  ill-conditioned. We obtain the measurements $y = \tilde{A}x \in \mathbb{R}^{10}$. For reconstruction, we build a linear network $f$ having 1 fully-connected layer with no non-linearity i.e. $f = B \in \mathbb{R}^{20 \times 10}$. The reconstruction is given by $\hat{x} = \hat{B}y$, where $\hat{B}$ is obtained from:
\begin{equation}
    \argmin_B\max_{\delta:\|\delta\|_2 \leq \epsilon} \mathbb{E}_x \|B\tilde{A}x - x\|^2 + \lambda \|B(\tilde{A}x + \delta) - x\|^2
\end{equation}
We have used $\lambda=1$,  $\epsilon = 0.1$, learning rate $= 0.001$ and momentum term as $0.9$ in our experiments. We obtain the theoretically derived reconstruction $B$ using the result given in \eqref{eq:theor_res} (from theorem \ref{theorem}). To compare $B$ and $\hat{B}$, we examined the following three metrics:
\begin{itemize}
    \item 
    $\|\hat{B} - B\|_F/\|B\|_F = 0.024, \|\hat{B} - B\|_2/\|B\|_2 = 0.034$
    \item $\|I - B\tilde{A}\|_F / \|I - \hat{B}\tilde{A}\|_F = 0.99936$, where $I$ is the identity matrix of size $20\times20$
    \item $\kappa(B) = 19.231$, $\kappa(\hat{B}) = 19.311$, $\kappa$: condition number
\end{itemize}
The above three metrics indicate that $\hat{B}$ indeed converges to the theoretically obtained solution $B$.
\section{Conclusions}
In this work, we propose a min-max formulation to build a robust deep-learning-based image reconstruction models. To make this more tractable, we reformulate this using an auxiliary network to generate adversarial examples for which the image reconstruction network tries to minimize the reconstruction loss. We theoretically analyzed a simple linear network and found that using min-max formulation, it outputs singular-value(s) filter regularized solution which reduces the effect of adversarial examples for ill-conditioned matrices. Empirically, we found the linear network to converge to the same solution. Additionally, extensive experiments with non-linear deep networks for Compressive Sensing (CS) using random Gaussian and DCT measurement matrices on MNIST and CelebA datasets show that the proposed scheme outperforms other methods for different perturbations $\epsilon \geq 0$, however the behavior depends on the conditioning of matrices, as indicated by theory for the linear reconstruction scheme.

\appendix
\section{Appendix} \label{appen}
\textit{\textbf{Proof of Theorem 1}}:\newline
For the inverse problem of recovering the true $x$ from the measurement $y = Ax$, goal is to design a robust linear recovery model given by $\hat{x} = By = BAx$.

The min-max formulation to get robust model for a linear set-up:
\begin{flalign}
    \min_{B} \max_{\delta: \|\delta\|_2 \leq \epsilon} & \mathbb{E}_{x \in D} \|BAx - x\|^2 + \lambda \| B(Ax+\delta) - x\|^2 \nonumber \\
    \min_{B} \max_{\delta: \|\delta\|_2 \leq \epsilon} & \mathbb{E}_{x \in D}  (1 + \lambda)\|BAx - x\|^2 + \lambda \|B\delta\|^2  \nonumber \\ 
    & \quad \quad \quad + 2\lambda (B\delta)^T (BAx- x) 
\end{flalign}
Assuming, the dataset is normalized, i.e., $\mathbb{E}(x) = 0$ and $cov(x) = I$. The above optimization problem becomes:
\begin{flalign}
    \min_{B} \max_{\delta: \|\delta\|_2 \leq \epsilon} & \mathbb{E}_{x \in D} (1 + \lambda)\|(BA - I)x\|^2  + \lambda \|B\delta\|^2 \nonumber\\
    \min_{B} \max_{\delta: \|\delta\|_2 \leq \epsilon} & \mathbb{E}_{x \in D} (1 + \lambda)tr(BA-I)xx^T(BA-I)^T \nonumber\\
    & \quad \quad \quad + \lambda \|B\delta\|^2 
\end{flalign}
Since, $\mathbb{E}(tr(\cdot)) = tr(\mathbb{E}(\cdot))$, the above problem becomes:
\begin{flalign}
    \min_{B} & \max_{\delta: \|\delta\|_2 \leq \epsilon} (1 + \lambda) tr(BA-I)(BA-I)^T + \lambda \|B\delta\|^2 \nonumber \\
    & \min_{B} \max_{\delta: \|\delta\|_2 \leq \epsilon} (1 + \lambda) \|BA-I\|_F^2 + \lambda \|B\delta\|^2 
\end{flalign}
Using SVD decomposition of $A = USV^T$ and $B=MQP^T \implies M^TM = I, P^TP = I$ and $Q$ is diagonal. Assume that $\mathbb{G}$ defines the set satisfying the constraints of $M^TM = I, P^TP = I$ and $Q$ is diagonal.
\begin{flalign}
    \label{svd_form}
    \min_{M, Q, P \in \mathbb{G}} \max_{\delta: \|\delta\|_2 \leq \epsilon} & (1 + \lambda) \|MQP^TUSV^T-I\|_F^2 \nonumber \\
    & + \lambda \|MQP^T\delta\|^2
\end{flalign}
Since, only the second term is dependent on $\delta$, maximizing the second term with respect to $\delta$: \\
We have $\|MQP^T\delta\| = \|QP^T \delta\|^2$ since $M$ is unitary. Given, $Q$ is diagonal, $\|QP^T \delta\|^2$ w.r.t. $\delta$ can be maximized by having $P^T\delta$ vector having all zeros except the location corresponding to the $\max_{i} Q_i$. Since, $\|P^T\delta\| = \|\delta\|$, again because $P$ is unitary, so to maximize within the $\epsilon$-ball, we will have $P^T\delta = \epsilon[0, .., 0, 1, 0,..,0]$ where $1$ is at the $\arg max_{i} Q_i$ position. This makes the term to be:
\begin{align*}
    \max_{\delta: \|\delta\|_2 \leq \epsilon} \|MQP^T\delta\|^2 = \epsilon^2 (\max_i Q_i)^2
\end{align*}
Substituting the above term in equation \ref{svd_form}:
\begin{flalign}
  \min_{M, Q, P \in \mathbb{G}}  (1 + \lambda) & \|MQP^TUSV^T-I\|_F^2 + \lambda \epsilon^2 (\max_i Q_i)^2 \nonumber \\
    \min_{M, Q, P \in \mathbb{G} }  (1 + \lambda) & tr(MQP^TUSV^T-I) (MQP^TUSV^T-I)^T  \nonumber\\
   & \qquad + \lambda \epsilon^2 (\max_i Q_i)^2 \nonumber \\
    \min_{M, Q, P \in \mathbb{G}} (1 + \lambda) & tr(MQP^TUS^2U^TPQM^T  \nonumber \\
    & \qquad \hspace{-4em}- 2 MQP^TUSV^T + I) + \lambda \epsilon^2 (\max_i Q_i)^2 \nonumber \\
    \min_{M, Q, P \in \mathbb{G}} (1 + \lambda) & tr(P^TUS^2U^TPQ^2 - 2 MQP^TSV^T + I)\nonumber \\ 
    & \qquad + \lambda \epsilon^2 (\max_i Q_i)^2 \label{svd_trace_form}
\end{flalign}
For the above equation, only second term depends on $M$, minimizing the second term w.r.t. $M$ keeping others fixed:
\begin{align*}
    \min_{M: M^TM = I} tr(-2 MQP^TUSV^T) \\
\end{align*}
    Since, this is a linear program with the quadratic constraint, relaxing the constraint from $M^TM=I$ to $M^TM \leq I$ won't change the optimal point as the optimal point will always be at the boundary i.e. $M^TM=I$
\begin{flalign}
     \min_{M: M^TM \leq I} tr(-2 MQP^TUSV^T) \text{ which is a convex program} \nonumber
\end{flalign}
Introducing the Lagrange multiplier matrix $K$ for the constraint
\begin{flalign}
    \quad \mathcal{L}(M, K) & = tr(-2 MQP^TUSV^T + K(M^TM-I)) \nonumber
\end{flalign}
Substituting $G = QP^TUSV^T$ and using stationarity of Lagrangian
\begin{flalign}
        \Delta L_{M} & = M(K+K^T) - G^T = 0 \implies ML = G^T, L = K+K^T \nonumber
\end{flalign}
Primal feasibility: $M^TM \leq I$. Optimal point at boundary $\implies M^TM = I$.\\
Because of the problem is convex, the local minima is the global minima which satisfies the two conditions: Stationarity of Lagrangian ($ML = G^T$) and Primal feasibility ($M^TM=I$). By the choice of $M = V$, and $L = SU^TPQ$, both these conditions are satisfied implying $M = V$ is the optimal point. \\
Substituting $M = V$ in equation \ref{svd_trace_form}, we get:
\begin{flalign}
    \min_{Q, P \in \mathbb{G}} & (1 + \lambda)tr(P^TUS^2U^TPQ^2 - 2 VQP^TUSV^T + I) \nonumber \\
    & \qquad + \lambda \epsilon^2 (\max_i Q_i)^2 \nonumber\\
    \min_{Q, P \in \mathbb{G}} & (1 + \lambda)tr(P^TUS^2U^TPQ^2 - 2 QP^TUS + I) \nonumber \\ 
    & \qquad + \lambda \epsilon^2 (\max_i Q_i)^2 \nonumber \\
    \min_{Q, P \in \mathbb{G}} & (1 + \lambda) \|QP^TUS-I\|_F^2 + \lambda \epsilon^2 (\max_i Q_i)^2 \label{pq_form}
\end{flalign}
Denote the $i$-th column of $C=U^TP$ by $c_i$ and suppose that entries in $Q$ are in decreasing order and the largest entry $q_m$ in $Q$, has multiplicity $m$, the equation \ref{pq_form} becomes:
\begin{flalign}
   \min_{C, Q} & (1 + \lambda) \sum_{i=1}^m \|q_mSc_i -e_i\|^2 +  \lambda \epsilon^2 q_m^2 \nonumber \\ 
   & + (1 + \lambda) \sum_{i=m+1}^n \|qiSc_i -e_i\|^2    \label{quad_q_form}
\end{flalign}
If we consider the last term i.e. $i> m$, it can be minimized by setting $c_i = e_i$ which is equivalent to choose $P_i = U_i$ and $q_i = 1/S_i$. This makes the last term ($=0$), using $h = \lambda \epsilon^2/(1+\lambda)$, making the equation \ref{quad_q_form} as:
\begin{align*}
    \min_{C, Q} \sum_{i=1}^m (c_i^TSq_m^2 Sc_i - 2 e_i^Tq_mSc_i + e_i^Te_i) + h q_m^2 \\
    \min_{C, Q} q_m^2(\sum_{i=1}^m c_i^TS^2c_i  +  h) - 2q_m \sum_{i=1}^m S_i C_{ii} + \sum_{i=1}^m e_i^Te_i
\end{align*}
The above term is upward quadratic in $q_m$, minima w.r.t. $q_m$ will occur at $q_m^* = \frac{\sum_{i=1}^m S_i C_{ii}}{(\sum_{i=1}^m c_i^TS^2c_i  +  h)}$, which will make the quadratic term as $\sum_{i=1}^m e_i^Te_i  - \frac{(\sum_{i=1}^m S_i C_{ii})^2}{ (\sum_{i=1}^m c_i^TS^2c_i  +  h)}$, which has to be minimized w.r.t $C$ 
\begin{flalign}
    \min_{C} \sum_{i=1}^m e_i^Te_i  - \frac{(\sum_{i=1}^m S_i C_{ii})^2}{ (\sum_{i=1}^m c_i^TS^2c_i  +  h)} \nonumber\\
    \max_{C} \frac{(\sum_{i=1}^m S_i C_{ii})^2}{ (\sum_{i=1}^m c_i^TS^2c_i  +  h)} \nonumber\\
    \max_{C} \frac{(\sum_{i=1}^m S_i C_{ii})^2}{ \sum_{i=1}^m S_i^2 C_{ii}^2 + \sum_{j \neq i} S_j^2 C_{ij}^2 + h} \label{diag_nondiag}
\end{flalign}
Since $C = U^TP \implies C_{ij} = u_i^Tp_j \implies \|C_{ij}\| \leq 1$. To maximize the term given by the equation \ref{diag_nondiag}, we can minimize the denominator by setting the term $C_{ij} = 0$, which makes the matrix $C$ as diagonal. \\
Divide the matrix $U$ and $P$ into two parts: one corresponding to $i \leq m$ and another $i > m$, where $i$ represents the column-index of $C = U^TP$. \\
Let $U = [U_1 | U_2]$ and $P = [P_1 | P_2]$. From above, we have $P_2 = U_2$ for $i > m$, making $P = [P_1 | U_2]$. \\
\begin{align*}
    U^T = \begin{bmatrix}
U_1^T\\ \hline
U_2^T 
\end{bmatrix}
\text{ and } P = [P_1 | U_2] \\
U^TP = \begin{bmatrix}
U_1^TP_1 & U_1^TU_2\\ 
U_2^TP_1 & U_2^TU_2 
\end{bmatrix}
= \begin{bmatrix}
U_1^TP_1 & \mathbf{0}\\ 
U_2^TP_1 & I
\end{bmatrix}
\end{align*}
Since, $U^TP$ is diagonal, we have $U_2^TP_1 = \mathbf{0}$, $U_1^TP_1 = \Gamma$ where $\Gamma$ is diagonal. Also, we have $P_1^TP_1 = I$. Only way to satisfy this would be making $P_1 = U_1$ which makes $P = U$ and $C = I$. It also results in \\
\begin{equation}
    q_m^* = \frac{\sum_{i=1}^m S_i}{ \sum_{i=1}^m S_i^2 + h}
\end{equation}

Hence, the resulting $B$ would be of the form $MQP^T$ where:
\begin{align}
    M = V, P = U \hspace{4em} \nonumber \\
    Q = \begin{bmatrix}
    q_m^* & 0 & ... & 0 \\
    0  & q_m^* & .. & 0\\
    :  &  :  & : & :\\
    :  &  :  & : & :\\
    0 &  ... & 1/S_{m+1} & ..\\
    :  & :   &  : & :\\
    :  &  :  & : & :\\
    0  & ... & 0 & 1/S_{n}
    \end{bmatrix}
\end{align}

\bibliography{example_paper}
\bibliographystyle{icml2020}



\end{document}